\newcommand\BibTeX{{\rmfamily B\kern-.05em \textsc{i\kern-.025em b}\kern-.08em
T\kern-.1667em\lower.7ex\hbox{E}\kern-.125emX}}
\newcommand{\statespace}{\ensuremath{\mathbf X}\xspace}
\newcommand{\obsspace}{\ensuremath{\mathbf Y}\xspace}
\newcommand{\actspace}{\ensuremath{\mathbf U}\xspace}
\newcommand{\imgcontroller}{\ensuremath{\emph{h}}\xspace}
\newcommand{\dynmodel}{\ensuremath{f}\xspace}
\newcommand{\obsmodel}{\ensuremath{o}\xspace}
\newcommand{\initstate}{\ensuremath{x_0}\xspace}
\newcommand{\safeprop}{\ensuremath{\varphi}\xspace}
\newcommand{\obsseqspace}{\ensuremath{\obsspace^m}\xspace}
\newcommand{\dataset}{\ensuremath{\mathbf{Z}}\xspace}
\newcommand{\dataseq}{\ensuremath{\mathbf{z}}\xspace}
\newcommand{\labelpred}{\ensuremath{\rho}\xspace}
\newcommand{\chancepred}{\ensuremath{g}\xspace}
\newcommand{\imgforecaster}{\ensuremath{f_{g}}\xspace}
\newcommand{\latforecaster}{\ensuremath{f_{l}}\xspace}
\newcommand{\enc}{\ensuremath{e}\xspace}
\newcommand{\dec}{\ensuremath{d}\xspace}
\theoremstyle{thmstyleone}%
\newtheorem{theorem}{Theorem}%
\theoremstyle{thmstylethree}%
\newtheorem{definition}{Definition}%
\newtheorem{problem}{Problem}%
\newtheorem*{problem*}{Problem}%
\begin{document}

\title[How Safe Will I Be Given What I Saw?]{How Safe Will I Be Given What I Saw? Calibrated Safety Prediction for Image-Controlled Autonomy}

\author*[1]{\fnm{Zhenjiang} \sur{Mao}}\email{z.mao@ufl.edu}
\author[1]{\fnm{Mrinall Eashaan} \sur{Umasudhan}}
\author[1]{\fnm{Ivan} \sur{Ruchkin}}

\affil[1]{\orgdiv{Trustworthy Engineered Autonomy (TEA) Lab}, \orgname{University of Florida}, \orgaddress{\city{Gainesville}, \state{FL}, \country{USA}}}

\abstract{Autonomous robots that rely on deep neural network controllers pose critical challenges for safety prediction, especially under partial observability and distribution shift. Traditional model-based verification techniques are limited in scalability and require access to low-dimensional state models, while model-free methods often lack reliability guarantees. This paper addresses these limitations by introducing a framework for calibrated safety prediction in end-to-end vision-controlled systems, where neither the state-transition model nor the observation model is accessible. Building on the foundation of world models, we leverage variational autoencoders and recurrent predictors to forecast future latent trajectories from raw image sequences and estimate the probability of satisfying safety properties. We distinguish between monolithic and composite prediction pipelines and introduce a calibration mechanism to quantify prediction confidence. In long-horizon predictions from high-dimensional observations, the forecasted inputs to the safety evaluator can deviate significantly from the training distribution due to compounding prediction errors and changing environmental conditions, leading to miscalibrated risk estimates. To address this, we incorporate unsupervised domain adaptation to ensure robustness of safety evaluation under distribution shift in predictions without requiring manual labels. Our formulation provides theoretical calibration guarantees and supports practical evaluation across long prediction horizons. Experimental results on three benchmarks --- racing car, cart pole, and donkey car --- show that our UDA-equipped evaluators maintain high accuracy and substantially lower false positive rates under distribution shift. Similarly, world model-based composite predictors outperform their monolithic counterparts on long-horizon tasks, and our conformal calibration provides reliable statistical bounds.}

\keywords{Pixel-to-action control, world models, confidence calibration, conformal prediction, unsupervised domain adaptation}

\maketitle

\section{Introduction}\label{sec:introduction}

Autonomous robots increasingly rely on high-resolution sensors (e.g., cameras, lidars) and deep-learning architectures~\cite[]{grigorescu_survey_2020,yurtsever_autonomous_2020,liu_multimodal_2024}. End-to-end reinforcement and imitation learning have become widespread, bypassing traditional components such as state estimation and planning~\cite[]{bojarski_end_2016,codevilla_conditional_2018,kiran_rl_survey_2021}.
Although effective, these end-to-end systems often depend on complex, opaque models whose internal decision-making processes are difficult to interpret~\cite[]{kuznietsov_explainable_2024}. This lack of transparency, combined with the inherent variability and unpredictability of real-world conditions, makes it difficult to anticipate potential failures before they manifest~\cite[]{hecker_failure_2018,filos_distribution_shift_2020}. Failures in autonomous driving can lead to catastrophic accidents, which erode public trust and hinder broader adoption~\cite[]{penmetsa_effects_2021}. Robust safety prediction is therefore essential to ensure the reliability and acceptance of these systems, making it both a technical hurdle and a societal imperative.

In the autonomous driving industry, vision-based frameworks such as Tesla Vision demonstrate the feasibility of end-to-end control using only camera inputs ~\cite[]{tesla_vision_2025}. By learning scene geometry and motion directly from raw images, such systems remove map and sensor dependencies and adapt more readily to new roads and dynamic conditions. However, this image-only paradigm amplifies the need for robust, well-calibrated safety prediction, as perception and control are fully integrated and lack explicit, low-dimensional state representations.

Moreover, in safety-critical settings, quantifying uncertainty becomes essential to operate under partial observability or environmental variability. However, modern deep models often produce overconfident or miscalibrated outputs, undermining reliability~\cite[]{guo_calibration_2017,minderer_revisiting_2021}. In autonomous robots, this can result in unsafe behavior or missed interventions~\cite[]{lutjens_safe_2018,feng_can_2019}. Thus, to ensure safety, predictions must be accompanied by confidence estimates that accurately reflect real-world outcomes.

\begin{figure}[t]
    \centering
    \includegraphics[width=0.8\columnwidth]{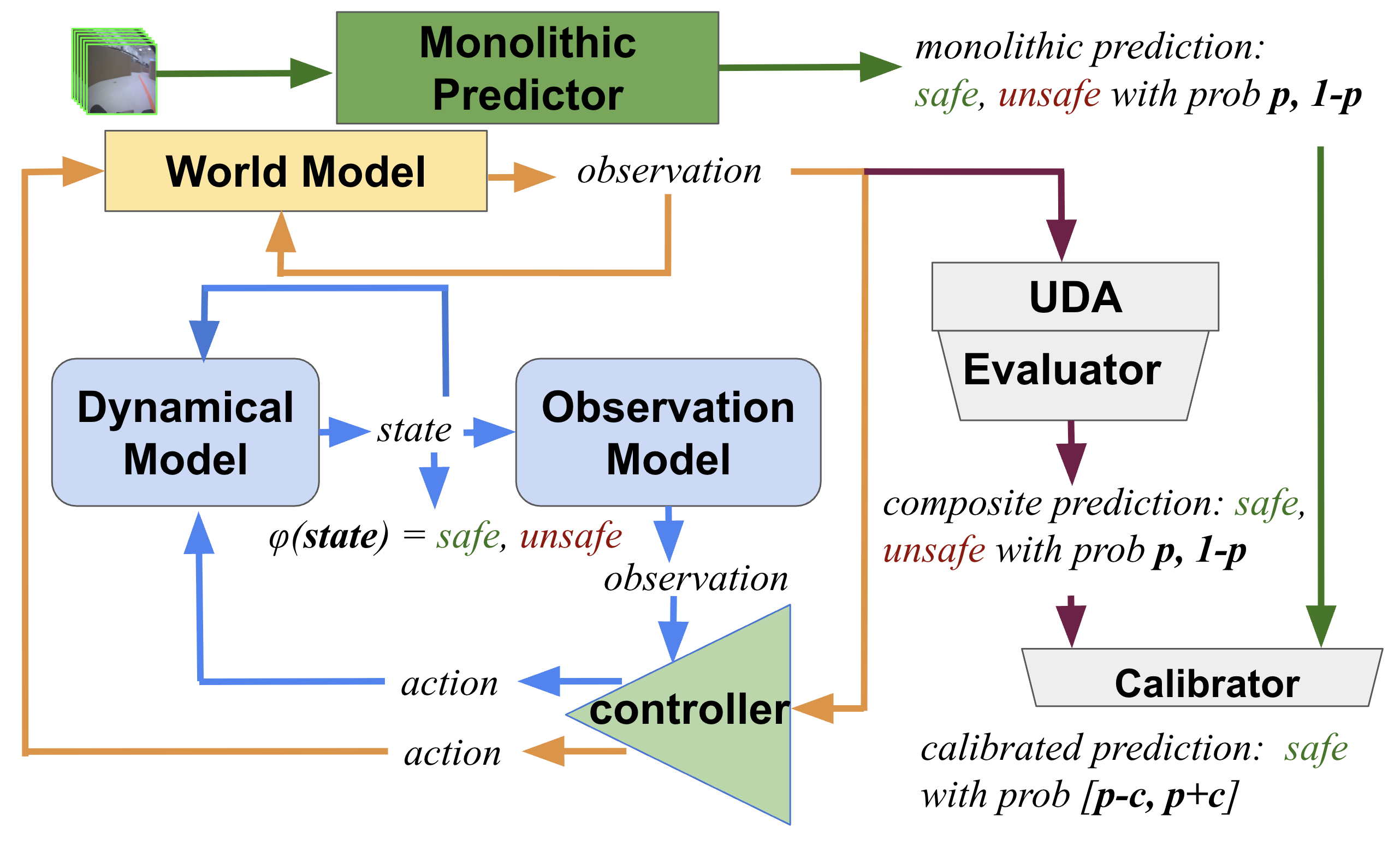}
    \caption{Safety prediction pipelines for image-controlled autonomous systems.
    The \textcolor{blue}{blue arrows} represent the underlying dynamics loop, consisting of the unknown dynamical model, unknown observation model, and known controller.
    The \textcolor{orange}{orange arrows} indicate the learned \textit{world model loop}, where future observations are simulated based on past observations and actions.
    The \textcolor{ForestGreen}{green path} illustrates the \textit{monolithic prediction pipeline}, which directly predicts safety outcomes (safe/unsafe with probability \(p\)) from observations.
    The \textcolor{purple}{purple path} shows the \textit{composite prediction pipeline}, where a forecaster predicts future observations that are passed to a safety evaluator.}
    \label{fig:overview}
\end{figure}

Many existing safety prediction methods depend on low-dimensional, physically meaningful states (e.g., poses and velocities) and hand-crafted indicators that require extensive domain knowledge and offline analysis, making them impractical for online application to high-dimensional, image-based controllers ~\cite[]{ramesh_robot_vitals_2022,gautam_know_limits_2022,ruchkin_confidence_2022,conlon_event_triggered_self_assessment_2024}. Additionally, model-based offline verification scales poorly to vision systems, as it demands detailed modeling of the perception subsystem~\cite[]{everett_efficient_2021,toufighi_decision_2024,mitra_formal_2025}. Similarly, black‐box statistical risk assessments still tend to assume low‐dimensional state representations and rich outcome labels (e.g., signal robustness)~\cite[]{donze_robust_2010,muller_situational_risk_2022,lindemann_conformal_2023}.
Recent graph‐based models HDGT, FutureNet, and SemanticFormer represent the road environment as a graph whose nodes correspond to map elements (e.g., lanes, intersections) and dynamic agents, and whose edges encode spatial or temporal relationships~\cite[]{li_hdgt_2022,chen_futurenet_2023,wang_semanticformer_2024}. These models use the graph structure to perform multi‐agent trajectory prediction conditioned on HD maps, precomputed lane graphs, and multiple sensors inputs. This dependence incurs high annotation and maintenance costs and limits application to unmapped or rapidly changing environments.

However, purely data-driven, image-based predictors themselves can suffer under distribution shift, leading to miscalibrated risk estimates and degraded accuracy when encountering novel environmental conditions~\cite[]{filos_distribution_shift_2020,yang2024generalized}. Under these circumstances, the predicted vision trajectory cannot be accurately classified by a visual safety evaluator, where the small errors compound over time, producing visual trajectories whose distribution deviates substantially from the training data. As a result, a static evaluator trained offline often fails to maintain calibration and exhibits inflated false-positive rates, which directly undermines reliability in safety-critical decisions. This motivates the need for an adaptive mechanism that can align the evaluator to the evolving input distribution without manual relabeling~\cite{mao2024safe}.

This article addresses the problem of \textbf{reliable online safety prediction} without access to a known, low-dimensional dynamical state. Instead, our predictor relies on a rolling window of images, for example, the RGB frames from the most recent mm time steps of a race car’s run.
Given this image history, our goal is to predict the probability that the car will stay on track for the next \(k\) frames. We also require that this predicted probability is \textit{calibrated well}; that is, the events forecast with probability \(p\) should occur approximately a \(p\) fraction of the time.

\looseness=-1
We introduce a flexible family of learning pipelines, shown in Figure~\ref{fig:overview}, for image-based safety prediction, featuring:
(i) \emph{deep world models} that learn compact latent representations from raw observations and jointly model the dynamics to predict future observations;
(ii) monolithic models that \emph{capture temporal evolution} of latent safety-critical features without reconstructing full observations;
(iii) an \emph{adaptive UDA module} that detects distribution shifts and selectively fine-tunes the safety evaluator online to maintain calibration on shifted inputs and
(iv) \emph{post-hoc conformal calibration method} that converts raw scores into trustworthy probabilities independently of the underlying architecture.

We evaluate our pipeline on three benchmarks of long-horizon safety prediction: racing car and cart pole from simulated OpenAI Gym environments and a physical deployment of camera-driven racing cars, known as Donkey Cars~\cite[]{viitala_learning_2021,viitala_scale_2021}.
Our results reveal the advantages of modular pipelines with attention mechanisms and show that predicting calibrated probabilities is easier than predicting binary safety labels over extended horizons.

In summary, this article makes three contributions:
\begin{enumerate}
    \item A family of \emph{modular pipelines for safety prediction} in image-controlled autonomous robots.
    \item A \emph{conformal post-hoc calibration} technique to equip safety estimates with statistical guarantees.
    \item A \emph{comprehensive, realistic experimental evaluation} that highlights the impact of modularity and calibration on long-horizon safety prediction.
\end{enumerate}

An earlier iteration of this work appeared in a conference paper~\cite{mao2024safe}. This journal article substantially extends the conference version in three key ways:
\begin{enumerate}
    \item We develop a UDA method for making safety evaluators robust to distribution shift. In addition to our original image-based evaluator, we introduce a novel \textit{latent-space safety evaluator}, enabling more robust post-hoc safety estimation for forecasted visual observations and latent states.
    \item We improve the long-horizon performance of our pipelines with the latest world-model architectures: bidirectional LSTMs, Vector Quantized VAEs (VQ-VAEs), and Transformers.
    \item We introduce a new physical case study of vision-based Donkey Cars to evaluate our predictors on a realistic high-speed autonomous racing.
\end{enumerate}

\section{Related Work}\label{sec:relwork}

\subsection{Trajectory Prediction}\label{subsec:trajpred}

A common way of predicting the system's performance and safety is by inferring it from \emph{predicted trajectories}.
Classic approaches consider model-based prediction, for instance, estimating collision risk with bicycle dynamics and Kalman filtering~\cite[]{5284727,lefevre_survey_2014}. One common approach with safety guarantees is \emph{Hamilton-Jacobi} (HJ) reachability, which requires precomputation based on a dynamical model~\cite[]{li_prediction-based_2021,nakamura2023online}.
Among many deep learning-based predictors, a popular architecture is \emph{Trajectron++}, which takes in high-dimensional scene graphs and outputs future trajectories for multiple agents~\cite[]{huang_survey_2022,salzmann_trajectron_2020}. A conditional VAE is adopted in Trajectron++ to add constraints at the decoding stage~\cite[]{NIPS2015_8d55a249}. This can predict more than one reasonable trajectory option under the encoding condition, which helps analyze the future safety under different ego-vehicle intentions. Learning-based trajectory predictions can be augmented with conformal prediction to improve their reliability~\cite[]{lindemann_conformal_2023,muthali_multi-agent_2023}. These techniques have also been adapted to motion planning settings, where adaptive conformal prediction provides calibrated guarantees for multi-agent collision avoidance~\cite[]{dixit2023adaptive}. Recent work has further emphasized generalizability and interpretability as key desiderata for trajectory predictors, advocating for Bayesian latent-variable models to provide better uncertainty modeling and extrapolation across scenarios~\cite[]{lu2024towards}.

In comparison, our work eschews handcrafted scene and state representations, instead using image and latent representations that are informed by safety. This trend is echoed in efforts to design self-aware and interpretable neural networks~\cite[]{itkina2023interpretable} for robust trajectory prediction under uncertainty and interpretable world models using physically meaningful representations under dynamical constraints~\cite[]{peper2025four,mao2024towards}.

\subsection{Performance and Safety Evaluation}\label{subsec:perfsafe}

A variety of recent research enables robotic systems to \emph{self-evaluate} their competency~\cite[]{basich_competence-aware_2022}. Performance metrics vary significantly and include such examples as the time to navigate to a goal location or whether a safety constraint was violated, which is the focus of our paper. For systems that have access to their low-dimensional state, such as the velocity and the distance from the goal, \emph{hand-crafted indicators} have been successful in measuring performance degradation. These indicators have been referred to robot vitals, alignment checkers, assumption monitors, and operator trust~\cite[]{9834068,9812030,ruchkin_confidence_2022,conlon_im_2022}. Typically, these indicators rely on domain knowledge and careful offline analysis of the robotic system, which are difficult to respectively obtain and perform for high-dimensional sensors.

Deep neural network (DNN) controllers are particularly vulnerable to distribution shift and difficult to formally analyze~\cite[]{MORENOTORRES2012521,HUANG2020100270}. This challenge, combined with the critical need for safety and performance guarantees, has driven the development of both model-based and model-free prediction approaches. On the model-based side, \emph{closed-loop verification approaches} perform reachability analysis to evaluate the safety of a DNN-controlled system~\cite[]{ivanov_verisig_2021,tran_verification_2022}; however, when applied to vision-based systems, these techniques require detailed modeling of the vision subsystem and have limited scalability~\cite[]{santa_cruz_nnlander-verif_2022,hsieh_verifying_2022,mitra_formal_2025}. Furthermore, these closed-loop verification methods fundamentally require an explicit low-dimensional dynamical model to perform reachability analysis; our framework operates end-to-end from image sequences to safety predictions without access to such a model, making these methods architecturally incompatible with our setting~\cite[]{ivanov_verisig_2021,tran_verification_2022}. On the other hand, \emph{model-free safety predictions} rely on the correlation between performance and uncertainty measures/anomaly scores, such as autoencoder reconstruction errors and distances to representative training data~\cite[]{yang_interpretable_2022}.
Striking the balance between model-based and model-free approaches are \emph{black-box statistical methods} for risk assessment and safety verification~\cite[]{cleaveland_risk_2022,10.1145/3365365.3382209,9797578,michelmore_uncertainty_2020}, such as uncertainty quantification with statistical guarantees in end-to-end vision-based control~\cite[]{michelmore_uncertainty_2020}.
These methods require low-dimensional states and rich outcome labels (such as signal robustness) --- an assumption that we are relaxing in this work while addressing a similar problem.

\noindent
In addition, safe planning and control solve a problem complementary to ours: they find actions to control a robot safely, usually with respect to a dynamical model, such as motion planning under uncertainty for uncertain systems and a growing body of work on control barrier functions~\cite[]{knuth_planning_2021,hibbard_safely_2022,chou_synthesizing_2023,ames_control_2019,xiao_safe_2023}. Some approaches are robust to measurement errors induced by learning-based perception, but at their core, they still hinge on the utilization of low-dimensional states, a paradigm we aim to circumvent entirely~\cite[]{dean_guaranteeing_2021,yang_safe_2023}. Importantly, one of our constraints is assuming the existence of an end-to-end controller (possibly implemented with the above methods), with no room for modifications.

In the context of confidence calibration, the softmax scores of classification neural networks can be interpreted as probabilities for each class; however, standard training leads to miscalibrated neural networks, as measured by \emph{Brier score} and \emph{Expected Calibration Error} (ECE)~\cite[]{guo_calibration_2017,minderer_revisiting_2021}. Calibration approaches can be categorized into \emph{extrinsic (post-hoc) calibration} added on top of a trained network, such as Platt and temperature scaling, isotonic regression, and histogram/Bayesian binning---and \emph{intrinsic calibration} to modify the training, such as ensembles, adversarial training, and learning from hints, error distances, or true class probabilities~\cite[]{platt_probabilistic_1999,guo_calibration_2017,zadrozny_transforming_2002,naeini_obtaining_2015,zhang_mix-n-match_2020,lee_training_2018,devries_learning_2018,xing_distance-based_2019,corbiere_addressing_2019}. Similar techniques have been developed for calibrated regression~\cite[]{vovk_conformal_2020,marx_modular_2022}. It is feasible to obtain calibration guarantees for safety chance predictions, but it requires a low-dimensional model-based setting~\cite[]{ruchkin_confidence_2022,cleaveland_conservative_2023}. While recent work focuses on conformal calibration of temporal logic satisfaction in structured, model-based systems, our work addresses a different challenge: calibrating safety chance predictions directly from image sequences in end-to-end controlled systems without access to low-dimensional state or explicit logical specifications~\cite{lindemann_conformal_2023}. To the best of the authors' knowledge, such guarantees have not been instantiated in a model-free autonomy setting.
\subsection{World Models}\label{subsec:worldmodels}

For systems with high-resolution images and complex dynamical models, trajectory prediction becomes particularly challenging. Physics-based methods and classical machine learning methods may not be applicable to these tasks, leading to the almost exclusive application of deep learning for prediction. \emph{Sequence prediction models}, which originate in deep video prediction, often struggle in our setting due to compounding prediction errors, the difficulty of modeling long-term dependencies, and the large data requirements for learning high-dimensional dynamics~\cite[]{Oprea_2022}. However, incorporating additional information, such as states, observations, and actions, in sequence predictions can improve their performance --- and our predictors take advantage of that~\cite[]{oh2015action,finn2017deep,strickland2018deep}. Another way to improve performance is to reduce the input dimensionality. High-dimensional sensor data often contains substantial redundant and irrelevant information, which leads to higher computational costs. Generative adversarial networks (GANs) map the observations into a low-dimensional latent space, which can enable model-based assurance algorithms, and low-dimensional representations can incorporate conformal predictions to monitor the real-time performance~\cite[]{goodfellow2020generative,katz2021verification,boursinos_assurance_2021}.
\emph{World models}~\cite[]{ha_recurrent_2018} are a model-based approach in which a learned environment model serves as a surrogate for the real environment, allowing the controller to be trained largely in imagination rather than through direct interaction with the physical world. This can substantially improve sample efficiency and stability compared to training solely in the real environment. They achieve better results in training the controllers than basic RL. The world model learns an approximation of the system’s dynamics by collecting observations and actions as inputs and outputs the future states. The observation encoder, typically implemented as a VAE, compresses each observation into a latent space (low-dimensional vectors) and trains a mixed-density recurrent neural network (MDN-RNN) to forecast the next-step latent observations. The decoder of the VAE maps the latent vector into the observation space. This VAE--MDN-RNN world model is then applied to do trajectory predictions and get the competency assessment through the forecasting trajectory~\cite[]{acharya_competency_2022}.

Recent advances explore the use of discrete latent spaces for more structured and sample-efficient planning, including transformer-based discrete world models for continuous control, building on neural discrete representation learning and echoing our use of VQVAE-based encoders~\cite{scannell2025discrete,van2017neural}. Building on the original VAE--MDN-RNN architecture of world models, more advanced models like \emph{DreamerV2} and \emph{IRIS} adopt recent deep computer vision models like vision transformers to achieve more vivid images~\cite[]{ha_recurrent_2018,hafner2022mastering,micheli2023transformers}. Transformers have also been shown to improve sample efficiency in latent model training, a crucial factor for model-based planning and control~\cite[]{micheli2022transformers}. Recent systems such as DayDreamer extend this line of work by leveraging compact world models for sample-efficient control in physical robots, demonstrating the practicality of Dreamer-style pipelines~\cite[]{wu2023daydreamer}. More recently, Orbis compares different latent spaces within a unified hybrid tokenizer, showing that a flow-matching-based continuous model achieves superior long-horizon prediction in challenging driving scenarios using only raw video data~\cite{mousakhan2025orbis}.

The trajectories predicted by world models have been used to provide uncertainty measurements for agents, albeit without statistical guarantees so far~\cite[]{acharya2023learning}. In our method, we adopt different architectures of world models and compare them with other one-step monolithic deep learning models using the recurrent modules. Building on this foundation, recent work has demonstrated that pre-trained, interpretable world models can enable zero-shot safety prediction from visual inputs, and that weakly supervised latent representations can further improve physical consistency and trajectory-level accuracy~\cite[]{mao2024zero,mao2024towards}. These efforts are aligned with emerging principles for physically interpretable world models, which call for organized latent spaces, aligned invariant/equivariant representations, integration of diverse supervision, and partitioned outputs for scalability~\cite[]{peper2025four}. Compared with these approaches, this article develops chance calibration guarantees, which were previously unexplored.

\subsection{Unsupervised Domain Adaptation}\label{subsec:uda}

Predicting future observations from high-dimensional sensory inputs, such as image sequences, inevitably leads to distribution shift, especially over long prediction horizons. This shift may arise from subtle environmental variations, motion blur, sensor noise, or compounding prediction errors. When applied to such shifted predictions, safety evaluators often produce unreliable or overconfident outputs, which compromise downstream decision-making~\cite[]{yang2024generalized,stocco2020misbehaviour}.
To address this, we adopt a principled solution of \textit{unsupervised domain adaptation (UDA)} that continuously adapts the safety evaluator to the evolving input distribution without requiring manual relabeling. This enables the system to maintain calibration guarantees even as the environment changes or predictions drift from training conditions.

A range of UDA strategies has been developed across machine learning and robotics. Early methods, such as Deep CORAL and MMD-based alignment, align source and target feature distributions via statistical matching~\cite[]{sun2016deep,long2015learning}. Adversarial approaches like DANN and ADDA train domain-invariant representations by using domain discriminators~\cite[]{ganin2016domain,tzeng2017adversarial}. Others operate at the image level using GAN-based transformations~\cite[]{bousmalis2017unsupervised,zhu2017unpaired,sobolewski2025generalizable}. More recent work enables adaptation at test time, including entropy-based methods like Tent and contrastive self-supervised approaches such as CoSCA~\cite[]{wang2020tent,kang2019contrastive}. In our work, we adopt a lightweight method using the entropy minimization (MEMO) approach, which does not require source data during adaptation~\cite[]{zhang2022memo}. It is well-suited for safety evaluation in long-horizon prediction settings because it can be applied efficiently within the prediction pipeline without querying external data.

\section{Background and Problem}\label{sec:prelim}

This section introduces the necessary notation and describes the problem addressed in this article.

\subsection{Problem Setting}\label{subsec:problemsetting}

\begin{definition}[Robotic system]
A discrete-time \emph{robotic system} $s=(\statespace,\obsspace,\actspace,\imgcontroller,\dynmodel,\obsmodel,\initstate,\safeprop)$ consists of:
\begin{itemize}
    \item \emph{State space} \statespace, containing continuous states $x$
    \item \emph{Observation space} \obsspace, containing images $y$
    \item \emph{Action space} \actspace, with commands $u$
    \item \emph{Image-based controller} $\imgcontroller:\mathbf \obsspace \rightarrow \mathbf \actspace$, typically implemented by a neural network
    \item \emph{Dynamical model} $\dynmodel: \statespace \times \actspace \rightarrow \statespace$, determines the next state (unknown to us)
    \item \emph{Observation model} $\obsmodel:\statespace \rightarrow \obsspace$, which generates an observation based on the state  (unknown to us)
    \item \emph{Initial state} $ \initstate$, from which the system starts executing
    \item \emph{State-based safety property} $\safeprop: \mathbf{X} \rightarrow \{0,1\}$, which determines whether a given state $x$ is safe
\end{itemize}
\end{definition}

We focus on robotic systems with observation spaces with thousands of pixels and unknown non-linear dynamical and observation models. In such systems, while the state space \statespace is conceptually known (if only to define \safeprop), it is not necessary (and often difficult) to construct \dynmodel and \obsmodel because the controller acts directly on the observation space \obsspace. Without relying on \dynmodel and \obsmodel, end-to-end methods like deep reinforcement learning and imitation learning are used to train a controller \imgcontroller by using data from the observation space ~\cite[]{mnih2015human,hussein_imitation_2017}. Once the controller is deployed in state \initstate, the system executes a \emph{trajectory}, which is a sequence $\{x_i,y_i,u_i\}^{t}_{i=0}$ up to time $t$, where:
\begin{align}
x_{i+1} = \dynmodel(x_i, u_i), \quad  y_i = \obsmodel(x_i), \quad
u_i = \imgcontroller(y_i)
\end{align}

We consider robotic systems where the underlying dynamical model \dynmodel and observation model \obsmodel are unknown.
This means we cannot explicitly use the known dynamical and observation models to predict future safety. Instead, we extract predictive information from three sources of data. First, we will use the current observation, $y_i$, at some time $i$.
For example, when a car is at the edge of the track, it has a higher probability of being unsafe in the next few steps.
Second, past observations $y_{i-m+1},\dots, y_i$ provide dynamically useful information that can only be extracted from time series, such as the speed and direction of motion. For example, just before a car enters a turn, the sequence of past observations implicitly informs how hard it will need to be to remain safe.
Third, not only do observations inform safety, but so do the controller's past/present outputs $\imgcontroller(y_{i-m+1},\dots, y_i)$. For instance, if by mid-turn the controller has not changed the steering angle, it may be less likely to navigate this turn safely. In addition, forecasting future control actions $(\hat{u}_{i+1}, \dots, \hat{u}_{i+k})$ would provide another input to predict the trajectory.

Our goal is to predict the system's safety $\safeprop (x_{i+k})$ at time $i+k$
given a series of $m$ observations $\mathbf y_i=(y_{i-  m+1},...,y_i)$.
This sequence of observations does not, generally, determine the true state $x_i$ (e.g., when $m=1$, the function \obsmodel may not be invertible). This leads to \emph{partial state observability}, which we model stochastically. Specifically, we say that from the predictor's perspective, $x_i$ is drawn from some belief distribution $\mathcal{D}_{\mathbf y_i}$. This induces a distribution of subsequent trajectories and transforms future safety $\safeprop (x_{i+k})$ into a Bernoulli random variable. Therefore, we will estimate the conditional probability $P(\safeprop (x_{i+k}) \mid \mathbf y_i)$ and provide an error bound on our estimates. Note that process noise in $d$ and measurement noise in $o$ are both supported by our approach and would be treated as part of the stochastic uncertainty in the future $\safeprop$. To sum up the above, we arrive at the following problem description.

\begin{problem}[Calibrated safety prediction]
Given horizon $k>0$, confidence $\alpha \in (0, 0.5)$, and observations $\mathbf y_i$ from some system $s$ with unknown \dynmodel and \obsmodel, estimate future safety chance $P(\safeprop(x_{i+k})\mid \mathbf y_i)$ and provide an upper bound for the estimation error that holds for at least $1-\alpha$\% of estimates.
\end{problem}

In addition, instead of only outputting an estimated safety chance, we aim to provide a guaranteed interval around it.

\begin{problem}[Probability interval prediction]
Given the same inputs as Problem 1 and an estimate
$P'(\safeprop(x_{i+k}) \mid \mathbf{y}_i)$ from a safety chance predictor, determine an interval over the estimated confidence with width $c$ such that it contains the true chance $P(\safeprop(x_{i+k}) \mid \mathbf{y}_i)$ with confidence $\alpha$:
\begin{align*}
  \mathbb{P}\Big( | P'(\safeprop(x_{i+k}) \mid \mathbf{y}_i) - P(\safeprop(x_{i+k}) \mid \mathbf{y}_i) | \le c  \Big) \ge 1 - \alpha.
\end{align*}
\end{problem}

\subsection{Predictors and Datasets}\label{subsec:predictorsdatasets}

\looseness=-1
To address our problem, we consider two types of safety predictors: one that outputs a binary safety label, and another that outputs the probability of being safe in the future.

\begin{definition}[Safety label predictor]\label{def:predictor-label}
For horizon $k > 0$, a \emph{safety label predictor} $\labelpred: \obsseqspace \rightarrow \{0, 1\}$ predicts the binary safety outcome $\safeprop(x_{i+k})$ at time $i+k$.
\end{definition}

\begin{definition}[Safety chance predictor]\label{def:predictor-chance}
For horizon $k > 0$, a \emph{safety chance predictor} $\chancepred: \obsseqspace \rightarrow [0, 1]$ estimates the probability $P'(\safeprop(x_{i+k}) \mid \mathbf{y}_i)$ that the system remains safe at time $i+k$.
\end{definition}

\begin{definition}[Safety chance interval predictor]\label{def:predictor-interval}
For horizon $k > 0$, a \emph{safety interval predictor} $\eta: \obsseqspace \rightarrow [0, 1] \times [0, 1]$ estimates the interval of probabilities $[P'(\safeprop(x_{i+k}) \mid \mathbf{y}_i) - c, P'(\safeprop(x_{i+k}) \mid \mathbf{y}_i) +c]$ that the system remains safe at time $i+k$ such that the true probability $P(\safeprop(x_{i+k}) \mid \mathbf{y}_i)$ lies within that interval with confidence at least $1 - \alpha$, where $\alpha \in (0, 0.5)$.
\end{definition}

Both types of predictors are trained using labeled \textit{observation-action datasets}, which combine each observation with the controller’s executed action and the eventual safety label. This setup enables training without sacrificing predictive fidelity, as actions encode critical behavioral information even in visually ambiguous settings. Thus, all models described in this article are trained on datasets of the following form:

\begin{definition}[Observation-action dataset]\label{def:oad}
An \emph{observation-action dataset} $\dataset = \{ (\dataseq_j, \safeprop_j) ~|~ j = 1, \dots, N \}$ consists of $m$-length sequences of image-action pairs
\[
\dataseq_j = ((y_{i-m+1}, u_{i-m+1}), \dots, (y_i, u_i))
\]
and a safety label $\safeprop_j := \safeprop(x_{i+k})$ obtained $k$ steps later.
\end{definition}

Notice that our dataset is structured not around sequential states within a single trajectory, but as independent trajectories sampled from the same distribution. This ensures that the calibration and test samples are i.i.d., satisfying the exchangeability requirement of conformal prediction. We are not applying conformal prediction to temporally correlated states along a single trajectory; rather, each data point represents an independent realization of the same prediction task.

\section{Modular Family of Safety Predictors}\label{sec:approach}

We begin by describing the construction of \emph{safety-label predictors} and then show how they are upgraded to \emph{safety-chance predictors} that quantify uncertainty.

\subsection{Monolithic and Composite  Predictors}\label{subsec:mono-comp}

\begin{figure*}[t]
    \centering
    \includegraphics[width=\columnwidth]{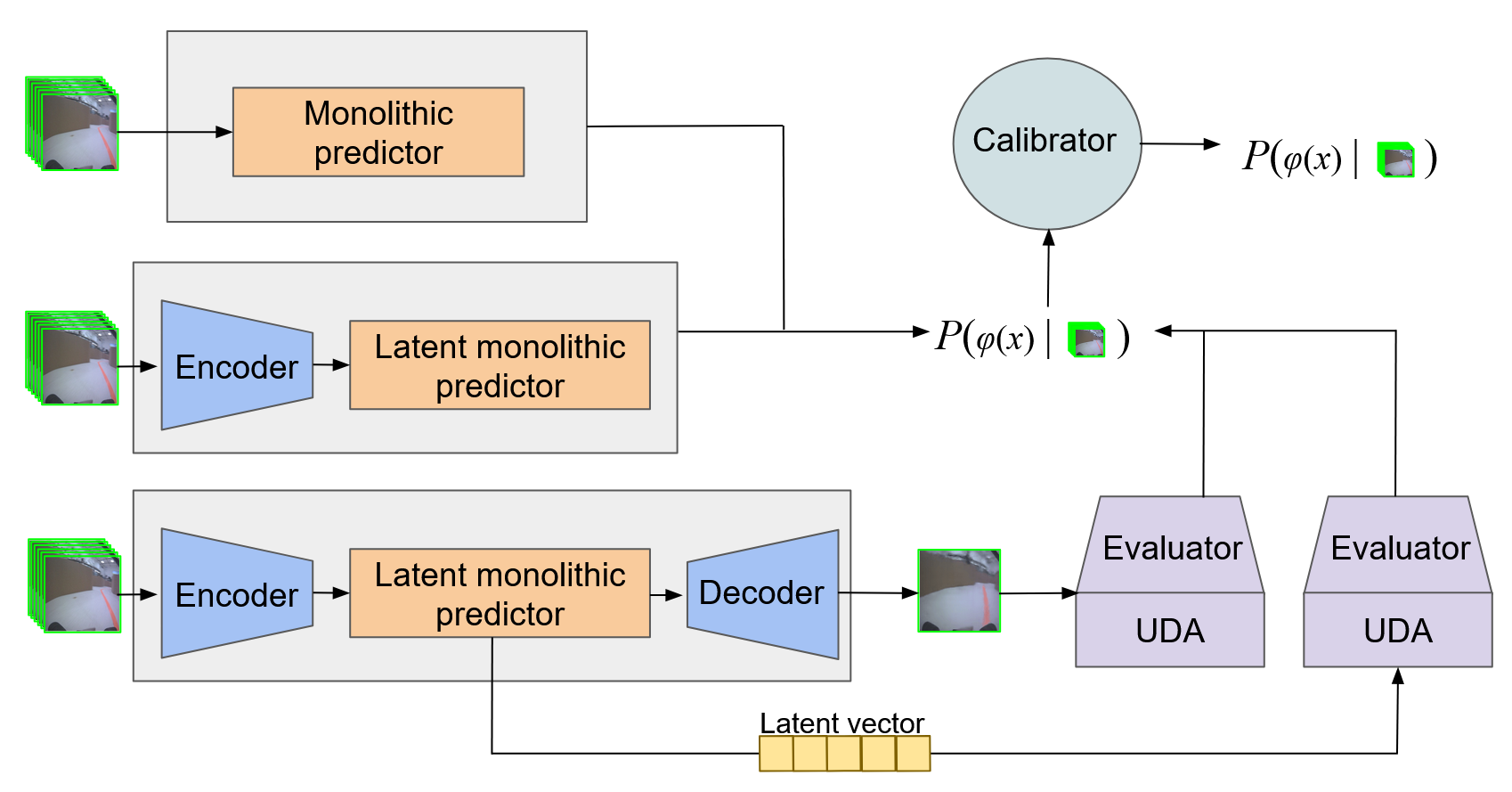}
    \caption{\looseness=-1
    Comparison of safety prediction pipelines and their world-model structures (gray boxes).
    \textbf{Top:} The \emph{monolithic predictor} maps image sequences directly to safety probabilities \(P(\varphi(x))\), followed by a \textit{calibrator} for improved confidence.
    \textbf{Middle:} The \emph{latent monolithic predictor} encodes images into latent states and predicts safety from these, with calibration applied afterward.
    \textbf{Bottom:} The \emph{composite latent predictor} encodes inputs, forecasts future latent states, and uses an \textbf{evaluator} with UDA to estimate safety.
    All pipelines predict the probability of satisfying the safety property \(\varphi(x)\).
    }
    \label{fig:comp}
\end{figure*}

Our family of safety predictors rests on a simple modularity principle: any safety-label predictor
can be decomposed into three conceptually independent blocks that communicate
through a latent space~$\Theta$. An \emph{encoder}~$E$ maps each observation
\(y_i \in \mathcal{Y} \) to a latent state \( \theta_i \in \Theta \).
A \emph{forcaster}~\(P\) is a generic sequence operator that consumes a window
of past states and returns a forecast \( k \) steps ahead; its internal
mechanism is not fixed: it can be recurrent, convolutional, attentional, or
something else entirely. Finally, an \emph{evaluator}~\(V\) assigns a binary
safety label to the forecast. Similar to the predictor, its internal mechanism is not fixed.

Since \(P\) is abstract, the same modular decomposition and reasoning apply whether it is implemented
as an LSTM, a GRU, a temporal convolution, a Transformer encoder stack, or any
future architecture with the requisite input–output signature. In what
follows, we treat \(P\) entirely as a black box. The only place we commit to a
concrete realisation is in the experimental section, where we provide
numerical results for two instantiations --- an LSTM and a
Transformer --- chosen to illustrate, respectively, recurrence and attention.

Two architectural designs arise from our modular view. A \emph{monolithic} predictor applies the encoder \(E\) and immediately feeds its latent output to the evaluator \(V\), so the safety label is produced in a single forward pass with no separate forecasting block. A \emph{composite} predictor inserts an explicit forecasting stage: it first reconstructs a
sequence of latent states and then
feeds those predictions to \(V\) for evaluation. Figures~\ref{fig:overview} and~\ref{fig:comp} illustrate both dataflows.

\begin{definition}[Monolithic latent predictor]\label{def:monolithic-predictor}
Let $E:\mathcal{Y} \to \Theta$ be an encoder and
$V:\Theta \to \{0,1\}$ an evaluator.
The composition $g = V \circ E$ is called a
\emph{monolithic latent predictor}.%
\end{definition}

\looseness=-1
Composite pipelines explicitly separate forecasting from evaluation. In the image space, a forecaster \(P_g\) predicts future frames that are then classified by a CNN. In the latent space, a forecaster \(P_l\) outputs \((\theta'_{i+1}, \ldots, \theta'_{i+k})\), after which \(V\) inspects the latent trajectory. Similarly, the design of the composite pipeline does not restrict how the forecaster is built; it may rely on recurrence, attention, or any hybrid thereof.


\begin{definition}[Composite image predictor]\label{def:composite-image}
A composite image predictor comprises two learned modules:
(i) an \emph{image forecaster} \(P_g : \mathcal{Z}_m \to \mathcal{Y}\) that
maps the past \(m\) observations \(z_i\) to a forecasted image
\(\hat y_{i+k}\), and
(ii) an \emph{evaluator} \(V : \mathcal{Y} \to \{0,1\}\) that assigns a
binary safety label to each predicted image.
The overall prediction is
\[
g(z_i) \;=\; V\!\bigl(F_g(z_i)\bigr).
\]
\end{definition}

\begin{definition}[Composite latent predictor]\label{def:composite-latent}
A composite latent predictor consists of three learned modules:
(i) an \emph{encoder} \(E : \mathcal{Z} \to \Theta\) that maps each
observation \(z_i\) to a latent state \(\theta_i \in \Theta\);
(ii) a \emph{latent forecaster}
\[
P_l :
\underbrace{\Theta \times \cdots \times \Theta}_{m\ \text{past states}}
\;\longrightarrow\;
\underbrace{\Theta \times \cdots \times \Theta}_{n\ \text{predicted states}}
\]
that, given \((\theta_{i-m+1},\dots,\theta_i)\), produces the sequence
\((\hat\theta_{i+k-n+1},\dots,\hat\theta_{i+k})\); and
(iii) an \emph{evaluator} \(V : \Theta \to \{0,1\}\) that labels each predicted
state. Thus, the composite prediction is denoted as
\[
V\!\bigl(\hat\theta_{i+k-n+1},\dots,\hat\theta_{i+k}).
\]
\end{definition}

\looseness=-1
For the latent forecaster~$P_l$, we compare three architectures.
The first is a unidirectional LSTM, which provides a strong recurrent baseline. The second extends this with a bidirectional LSTM that incorporates a backward pass, allowing predictions during training to condition on both preceding and succeeding elements of the input sequence~\cite[]{schuster_bidirectional_1997}. This enables the model to construct richer temporal representations with enhanced coherence and accuracy of the forecasted output window. Lastly, we adopt a Transformer-based architecture in which each layer combines multi-head self-attention with a position-wise feed-forward network, connected via residual links and layer normalization~\cite[]{vaswani_attention_2017}. This design allows every time step in the input window to attend to all others within the same window, enabling the model to capture complex long-range dependencies more effectively than recurrent models.
Once the forecast $(\hat\theta_{i+k-n+1},\dots,\hat\theta_{i+k})$ is produced, the evaluator~$V$ emits only a single binary label and does not benefit from future context; therefore,
we implement $V$ as a lightweight unidirectional LSTM, which is both
parameter-efficient and sufficient for this one-step task.

In the composite latent predictor pipeline, in addition to the standard continuous-latent VAE encoder, we also evaluate a discrete-latent variant based on the Vector Quantized VAE (VQ-VAE) architecture~\cite[]{van2017neural}. The VQ-VAE maps each observation into a sequence of discrete codes drawn from a learned codebook, which serves as a compact, structured representation of the input. This comparison allows us to assess whether discrete latent representations offer improved stability and safety-prediction accuracy in the composite pipeline.

\subsection{Training Process}\label{subsec:training}

Monolithic predictors $g$, evaluators $V$, and image forecasters $P_{g}$ are trained in a supervised manner on a training dataset $Z_{tr}$ containing observation sequences with associated safety labels or future images, respectively. For the composite models, we adopt a two‐stage training procedure. First, the VAE encoder $E$ is trained (as described in the next paragraph) to produce latent representations of observations. Then, the latent forecasters $P_{l}$ are trained on future latent vectors, obtained from true future observations with a VAE encoder $E$. The \textit{mean squared error (MSE)} loss is implemented for the LSTM and Transformer latent forecasters, since the MSE loss directly penalizes deviations between predicted and true continuous latent vectors. The monolithic predictors and evaluators use \textit{cross‐entropy (CE)} as the loss function, whereas it is suited for classification tasks like safety evaluation, as it maximizes the probability of the correct class. We also implement an early stopping strategy to reduce the learning rate as the loss drop slows down --- if the average loss improvement over the past 10 consecutive epochs is smaller than $10^{-4}$, we stop the training. Additionally, the learning rate is reduced when the loss drop slows down to encourage finer convergence.

For the VAE, the total loss $L$ consists of two parts. The first is the reconstruction loss $L_{\mathrm{recon}}$, which quantifies how well an original image $y$ is approximated by $d(e(y))$ using the MSE loss. The second is the latent loss $L_{\mathrm{latent}}$, which uses the Kullback–Leibler divergence loss to minimize the difference between two latent‐vector probability distributions $e(y \mid \theta)\approx d(\theta \mid y)$, where $\theta$ is a latent vector and $y$ is an input. Thus, the total loss function for the VAE combines the three parts with a regularization parameter $\lambda_{1} > 0$:
\begin{equation}\label{eq:2}
L \;=\; L_{\mathrm{recon}} \;+\; \lambda_{1}\,L_{\mathrm{latent}} \;
\end{equation}

One challenge is that the safety label balance changes in $Z_{tr}$ with the prediction horizon $k$, leading to imbalanced data for higher horizons. To ensure a balanced label distribution, we resample with replacement for a 1:1 safe:unsafe class balance in the training data.

\begin{algorithm}[t]
\caption{Conformal calibration for chance predictions}
\label{alg:concali}
\begin{algorithmic}[1]
    \Require A validation dataset bin \(B = \{b_{k}\}_{k=1,\dots}\), each containing sequences of observations and safety \(b_{k} = (y_{k}, \phi(x_{k}))\), trained safety chance predictor \(g\), and miscoverage level \(\alpha\)
    \Ensure confidence bound \(c\) satisfying Equation.~4.
    \State \textbf{Function} ConCali$(B, g, \alpha)$:
    \For{$j = 1$ to $Q$}
        \For{$i = 1$ to $M$}
            \State $B_{i} \leftarrow N$ i.i.d.\ samples from $B$ \{resampled bin\}
            \State $q_{i} \leftarrow \frac{1}{N}\sum_{l=1}^{N} g(y_{l}),\;\text{for each }y_{l}\in B_{i}$ \{mean safety chance prediction\}
            \State $p_{i} \leftarrow \frac{1}{N}\sum_{l=1}^{N} \phi(x_{l}),\;\text{for each }\phi(x_{l})\in B_{i}$ \{true safety chance\}
            \State $\delta_{i} \leftarrow \lvert q_{i} - p_{i}\rvert$ \{non‐conformity score\}
        \EndFor
        \State $n \leftarrow \lceil (M + 1)(1 - \alpha)\rceil$ \{conformal quantile\}
        \State $c \leftarrow$ the $n$‐th smallest value among $\{\delta_{1}, \ldots, \delta_{M}\}$
    \EndFor
    \State \textbf{return} $c$
\end{algorithmic}
\end{algorithm}
\subsection{Unsupervised Domain Adaptation}\label{subsec:uda}

\begin{figure}[t]
    \centering
    \includegraphics[width=0.7\columnwidth]{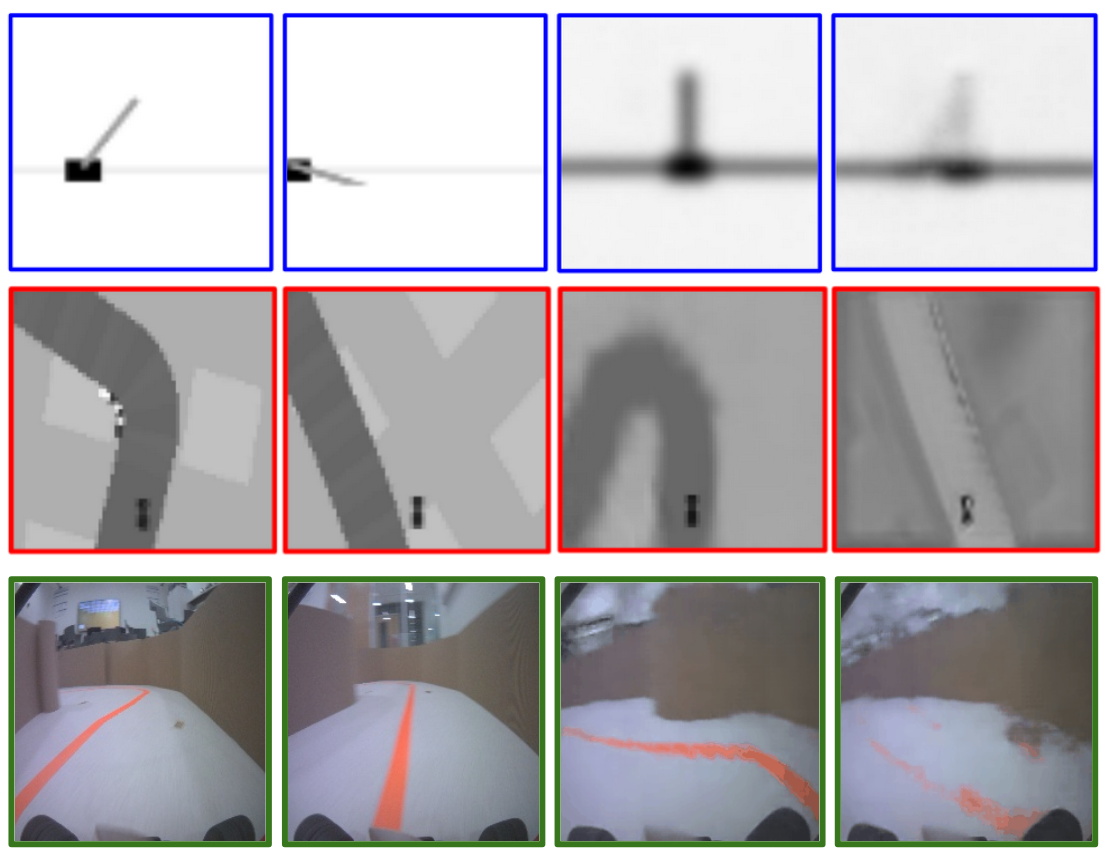}
    \caption{From top to bottom are the observations of our three cases (top to bottom: cart pole, racing car, and donkey car). The columns (left to right) are safe observation from $\obsspace$, unsafe observation from $\obsspace$, safe observation from $\dec(\latforecaster(\enc(\obsspace)))$, unsafe observation from $\imgforecaster(\obsspace)$ under distribution shift.
    }
    \label{fig:ood}
\end{figure}

The forecasts of latent vectors and observations in a composite pipeline are likely to be distribution-shifted, illustrated in Figure~\ref{fig:ood}. Such shifts degrade the accuracy of the safety evaluator. Since safety-labeling a significant amount of such data is impractical, we employ an \emph{unsupervised domain adaptation} strategy at test time to mitigate the impact on the evaluator accuracy without \textit{any} additional labels.

We adopt the \emph{Marginal Entropy Minimization} (MEMO) approach for per-sample test-time adaptation~\cite[]{zhang2022memo}. The key idea is to encourage the evaluator to produce confident and consistent predictions across multiple stochastically augmented versions of the same observation. This is achieved by: (1) generating augmented versions of the input visual observations; (2) aggregating their predictions into a marginal probability distribution; and (3) minimizing the entropy of this marginal distribution by updating the evaluator parameters.

Let
\[
\mathcal{A} = \{a_1, \ldots, a_B\}
\]
be a set of stochastic augmentation functions (e.g., random crop, color jitter, Gaussian noise), each mapping an image $y$ to an augmented image $a_i(y)$. For each class $s \in \{0, 1\}$, where $0$ denotes \emph{unsafe} and $1$ denotes \emph{safe}, we compute the \emph{marginal predicted probability}:
\begin{equation}
\bar{p}_\theta(s \mid y) = \frac{1}{B} \sum_{i=1}^{B} V_\theta(s \mid a_i(y)).
\end{equation}

The MEMO loss is the negative entropy of the marginal distribution:
\begin{equation}
L_{\mathrm{MEMO}}(\theta; y) = - \sum_{s \in \{0, 1\}} \bar{p}_\theta(s \mid y) \log \bar{p}_\theta(s \mid y),
\end{equation}
where $L_{\mathrm{MEMO}}$ encourages the evaluator to make low-entropy, consistent predictions across all augmented views.

The evaluator parameters are adapted with a single gradient descent step:
\begin{equation}
\theta^{+} \leftarrow \theta - \eta \cdot \nabla_\theta L_{\mathrm{MEMO}}(\theta; y),
\end{equation}
where $\eta > 0$ is the adaptation learning rate.

The adapted evaluator $V_{\theta^{+}}$ then outputs the predicted safety label:
\begin{equation}
\hat{s} = \arg\max_{s \in \{0,1\}} V_{\theta^{+}}(s \mid y).
\end{equation}

As indicated in Figures~\ref{fig:overview} and~\ref{fig:comp}, this label-free, per-sample adaptation improves evaluator robustness under latent distribution shift in composite latent predictors.

It is important to distinguish between two sources of distribution shift in our setting. First, the decoded images produced by the VAE may themselves be out-of-distribution for the safety evaluator, even when the underlying trajectory is in-distribution. This shift occurs because reconstruction artifacts and latent-space forecasting errors introduce visual distortions absent from the original training images. Second, the trajectories themselves may be out-of-distribution, representing scenarios not encountered during training. Our test-time adaptation strategy addresses the first source directly: by encouraging consistent, confident predictions across stochastically augmented views of each decoded image, MEMO makes the evaluator robust to the visual perturbations introduced by the decode and predict pipeline, without requiring any additional safety labels.

To mitigate the impact of distribution shifts occurring at the trajectory level, our framework currently relies on the conformal calibration mechanism described in the following section. When the system encounters a trajectory that significantly deviates from the training distribution, the predicted safety chance $P'(\varphi(x_{i+k})|y_i)$ naturally reflects higher uncertainty, leading to wider conformal intervals that alert the controller to potential unreliability~\cite{yang2024generalized}. 

This robustness can be further augmented by leveraging pre-trained foundation world models and multimodal meaning representations \cite{mao2024zero, liu2023meaning}. By utilizing the superior zero-shot generalization of these large-scale models, the safety evaluator can potentially reason about novel maneuvers or environmental geometries not present in the local training set, thereby bridging the gap between training and deployment distributions.

\subsection{Conformal Calibration for Chance Predictors}\label{subsec:concali}

\looseness=-1
Standard calibration methods operate on binary outcomes (safe/unsafe), which makes it difficult to obtain reliable probability guarantees for rare or noisy events. In our setting, we require guarantees not on single predictions but on average safety probabilities within bins, because the chance predictor outputs a continuous safety probability rather than a binary label. This necessitates bin averaging to obtain meaningful statistics, and motivates our use of adaptive binning combined with conformal prediction.


To turn a label predictor $\rho$ into a chance predictor $g$, we take its normalized softmax outputs and perform post‐hoc calibration~\cite[]{guo_calibration_2017,zhang_mix-n-match_2020}. On a held‐out calibration dataset $Z_{cal}$, we hyperparameter‐tune over state‐of‐the‐art post‐hoc calibration techniques: temperature scaling, logistic calibration, beta calibration, histogram binning, isotonic regression, ensemble of near‐isotonic regression (ENIR), and Bayesian binning into quantiles (BBQ)~\cite[]{wang2023calibration}.

To evaluate the quality of probability estimates produced by our chance predictors,
we consider two standard calibration metrics: \emph{Expected Calibration Error} (ECE)
and the \emph{Brier Score}~\cite[]{glenn1950verification,guo_calibration_2017}.
Let $\hat{p}_i \in [0,1]$ denote the predicted probability of safety for sample $i$,
and $y_i \in \{0,1\}$ the corresponding ground truth label.
Given $n$ samples and a binning function $\mathrm{bin}(\cdot)$ that assigns predictions to $Q$ disjoint bins,
ECE is defined as
\begin{equation}
\mathrm{ECE} \;=\; \sum_{j=1}^Q \frac{|B_j|}{n}
\left|\, \mathrm{acc}(B_j) - \mathrm{conf}(B_j) \,\right|,
\end{equation}
where $B_j$ is the set of samples in bin $j$,
$\mathrm{acc}(B_j) = \frac{1}{|B_j|} \sum_{i \in B_j} \mathbf{1}\{y_i = 1\}$
is the empirical accuracy, and
$\mathrm{conf}(B_j) = \frac{1}{|B_j|} \sum_{i \in B_j} \hat{p}_i$
is the mean predicted confidence in the bin.

The Brier Score measures the mean squared error between predicted probabilities and actual outcomes:
\begin{equation}
\mathrm{Brier Score} \;=\; \frac{1}{n} \sum_{i=1}^n \left( \hat{p}_i - y_i \right)^2 .
\end{equation}
Lower values of either metric indicate better-calibrated predictions.

By selecting the model that has the minimum ECE, we obtain calibrated softmax values $g(y_{i})$. Furthermore, to ensure that samples are spread evenly across bins to support our statistical guarantees, we perform \textit{adaptive binning} as defined below to construct a validation dataset $Z_{val}$, on which we obtain our guarantees.

\begin{definition}[Adaptive binning]\label{def:adaptive-binning}
Given a dataset $Z$, split $Z$ into $Q$ bins $\{B_{j}\}_{j=1}^{Q}$ by a constant count of samples, $\lfloor |Z|/Q\rfloor$ each. The resulting $\{B_{j}\}_{j=1}^{Q}$ is a binned dataset. From each bin $B_{j}$, we draw $N$ i.i.d.\ samples with replacement $M$ times to get $\{B_{j1},\ldots,B_{jM}\}$. For each resampled bin $B_{ji}$, we calculate
\[
\begin{split}
g_{ji} &= \frac{1}{|B_{ji}|}\sum_{y\in B_{ji}} g(y),\\
p_{ji} &= \frac{1}{|B_{ji}|}\sum_{y\in B_{ji}} \mathbf{1}\{\text{prediction is safe}\},\\
\delta_{ji} &:= \bigl|\,g_{ji} - p_{ji}\bigr|.
\end{split}
\]

Given a bin number $j$, our goal is to build prediction intervals $[0,c_{j}]$ that contain the calibration error of the next (unknown) average confidence $g_{j^{*}}$ that falls into bin $B_{j}$ with chance at least $1 - \alpha$:
\begin{equation}\label{eq:4}
P\bigl(\lvert g_{j^{*}} - p_{j}\rvert \le c_{j}\bigr) \;\ge\; 1 - \alpha
\quad \text{for } j = 1, \dots, Q,
\end{equation}
where $\alpha$ is the miscoverage level (a hyperparameter).
\end{definition}

That is, we aim to find a statistical upper bound $c_{j}$ on the error of our chance predictor in each bin. After we obtain a chance prediction $g_{j^{*}}$, it will be turned into an \textit{uncertainty‐aware interval}:\footnote{This is an interval on confidence, rather than a confidence interval.}
\[
\bigl[g_{j^{*}} - c_{j},\,g_{j^{*}} + c_{j}\bigr],
\]
which contains the true probability of safety in $1-\alpha$ cases. Notice that this guarantee is relative to the binned dataset fixed in Definition~\ref{def:adaptive-binning}.

To provide this guarantee, we apply conformal prediction in Algorithm~\ref{alg:concali}~\citep{lei2018distribution}. Intuitively, we rank the safety chance errors in resampled bins and obtain a statistical upper bound on this error for each bin. Similarly to existing works relying on conformal prediction, this algorithm guarantees equation \eqref{eq:4}~\cite[]{lindemann_conformal_2023,fan2020statistical}.  Recall that each calibration example $(\mathbf{y}_i, \safeprop(x_{i+k}))$ is drawn from an independent trajectory, so the calibration set satisfies the i.i.d.\ assumption required by conformal prediction.


\begin{theorem}[Conformal Bounds on Confidence Scores (adaptation of Theorem 2.1 in Lei et al. 2018\citep{lei2018distribution})]
Given a dataset bin $B = \{b_{k}\}_{k=1}^{K}$ of i.i.d.\ observation–state pairs $b_{k} = (y_{k}, x_{k})$, we obtain a collection of datasets $\{B_{j}\}_{j=1}^{M}$ by drawing $M$ datasets of $N$ i.i.d.\ samples from $B$, leading to datasets $B_{j}$ to be drawn i.i.d.\ from a dataset distribution $\mathcal{D}$. Then for another, unseen dataset $B_{M+1} \sim \mathcal{D}$, safety chance predictor $g$, and miscoverage level $\alpha$, calculating
\[
c \;=\; \operatorname{ConCali}(B,\,g,\,\alpha)
\]
leads to prediction intervals with guaranteed containment:
\[
P_{\mathcal{D}}\bigl(\lvert q(B_{M+1}) - p(B_{M+1})\rvert \le c\bigr) \;\ge\; 1 - \alpha,
\]
where $q(B_{M+1})$ is the mean safety chance prediction in $B_{M+1}$:
\[
q(B_{M+1}) \;\leftarrow\; \frac{1}{N}\sum_{l=1}^{N} g(y_{l}),
\quad \text{for each } y_{l} \in B_{M+1}.
\]

and $p(B_{M+1})$ is the mean true safety chance in $B_{M+1}$:
\[
p(B_{M+1}) \;\leftarrow\; \frac{1}{N} \sum_{l=1}^{N} \phi(x_{l}),
\quad \text{for each }\phi(x_{l}) \in B_{M+1},
\]
where $\phi(x)$ is the safety of $x$.
\end{theorem}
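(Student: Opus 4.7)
The plan is to recognize the statement as a direct instance of split conformal prediction at the \emph{dataset level}, where each resampled bin $B_j$ plays the role of a single calibration example and the quantity $\delta_j := |q(B_j) - p(B_j)|$ plays the role of its non-conformity score. The whole argument then reduces to the standard exchangeability-based quantile lemma underlying Theorem~2.1 of \citet{lei2018distribution}.

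First I would fix the interpretation of the distribution $\mathcal{D}$: conditional on the original bin $B$, each $B_j$ in line~4 of Algorithm~\ref{alg:concali} is obtained by drawing $N$ i.i.d.\ samples with replacement from $B$, so the $B_j$ are i.i.d.\ from the $N$-fold empirical-bootstrap law over $B$; this law is exactly what the theorem calls $\mathcal{D}$, and by assumption the fresh $B_{M+1}$ is drawn from the same $\mathcal{D}$. Second, since $s(B') := |q(B') - p(B')|$ is a deterministic function of its dataset argument (the predictor $g$ and the safety labeler $\varphi$ are fixed), applying $s$ coordinate-wise to the i.i.d.\ collection $\{B_1, \dots, B_{M+1}\}$ yields exchangeable real-valued scores $\delta_1, \dots, \delta_{M+1}$. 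Third, I would invoke the finite-sample quantile lemma for exchangeable scalars: for any exchangeable $(\delta_1, \dots, \delta_{M+1})$ and any $\alpha \in (0,1)$,
\begin{align*}
P\!\left(\delta_{M+1} \le \delta_{(n)}\right) \;\ge\; \frac{n}{M+1} \;\ge\; 1-\alpha,
\quad n = \lceil (M+1)(1-\alpha) \rceil,
\end{align*}
where $\delta_{(n)}$ denotes the $n$-th order statistic of $(\delta_1, \dots, \delta_M)$. Finally I would identify $c$ returned by ConCali with exactly this $\delta_{(n)}$ (lines~9--10 of the algorithm), and substitute $\delta_{M+1} = |q(B_{M+1}) - p(B_{M+1})|$ to conclude the coverage bound in the theorem.

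The main obstacle is not the exchangeability step itself, which is routine once $\mathcal{D}$ is specified, but keeping the \emph{two levels of sampling} conceptually clean: the inner i.i.d.\ draw of $N$ points that produces a single $B_j$, and the outer i.i.d.\ draw of $M+1$ such datasets that produces the calibration collection plus the test dataset. A small secondary subtlety is the handling of ties among the $\delta_j$; the conservative ceiling $n = \lceil (M+1)(1-\alpha) \rceil$ in the algorithm absorbs this, because the lower bound $n/(M+1) \ge 1-\alpha$ holds irrespective of how ties in the ranks are broken. The outer loop over $j = 1,\dots,Q$ in Algorithm~\ref{alg:concali} plays no role in the theorem, which is a per-bin statement; it merely indicates that the same construction is repeated for each of the $Q$ bins produced by adaptive binning, and a union bound (or independence across bins, if assumed) would convert the per-bin guarantee into a simultaneous one if desired.
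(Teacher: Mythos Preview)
Your proposal is correct and follows essentially the same approach as the paper's own proof: both reduce the theorem to the standard exchangeability-based quantile lemma from \citet{lei2018distribution}, using the i.i.d.\ structure of the resampled bins $B_j$ to guarantee exchangeability of the non-conformity scores $\delta_j = |q(B_j)-p(B_j)|$. Your version is considerably more explicit about the two levels of sampling, the identification of $c$ with the order statistic, and the role of the outer $Q$-loop, whereas the paper's proof is a brief sketch that simply invokes exchangeability and the ranking argument.
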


\begin{proof}
The proof follows the same structure as Theorem~2.1 in Lei et al. 2018, with the modification that non-conformity scores are computed on average safety probabilities within bins\citep{lei2018distribution}. Given that resampled bins $B_j$ are drawn i.i.d.\ from $\mathcal{D}$, the exchangeability assumption required by conformal prediction still holds. Therefore, ranking the absolute deviations $\delta_j$ yields the $(1-\alpha)$ coverage guarantee in \eqref{eq:4}.
\end{proof}

\begin{figure}[t]
  \centering
  \includegraphics[width=0.78\columnwidth]{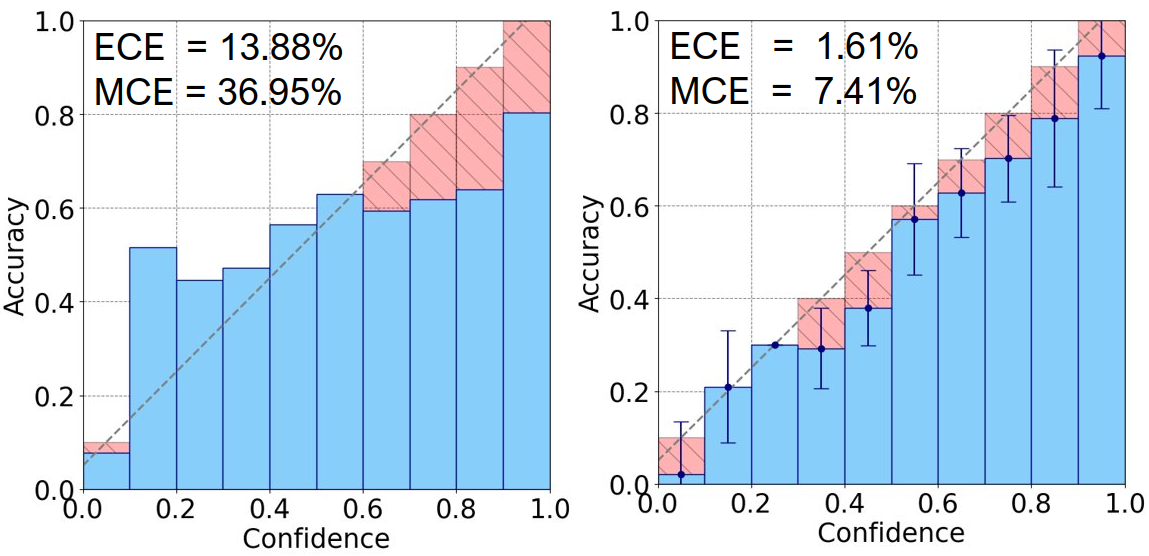}
  \caption{Calibration of a monolithic CNN predictor for the racing car with horizon $k=100$. Left: uncalibrated, right: calibrated via isotonic regression and conformal bounds for $\alpha=0.05$.}
  \label{fig:reliability-diagrams}
\end{figure}


\section{Experimental Evaluation}\label{sec:results}

\paragraph{Benchmark Systems}
    The racing car and cart pole from the OpenAI Gym and the realistic donkey car are selected as our study cases~\cite{brockman_openai_2016}. These environments are more suitable than pre-collected robotic datasets, such as the Waymo Open Dataset, for two reasons. First, to study safety prediction, we require a large dataset of safety violations, which is rarely found in real-world data. Second, the simulated dynamical systems are convenient for collecting an unbounded amount of data with direct access to the images and the ground truth for experimental evaluation (e.g., the car's position).

We defined the racing car's safety as being located within the track surface. The cart pole's safety is defined by the angles in the safe range of $[-6, 6]$ degrees, whereas the whole activity range of the cart pole is $[-48, 48]$ degrees. For the donkey car driving on a two-lane track, staying in the right lane is safe, whereas crossing over the lane boundary is unsafe.

\paragraph{Performance Metrics}

We use the \emph{F1 score} as our main metric for evaluating the accuracy of safety label predictors, as it balances the precision and recall:
\[
F1 = 2 \times \frac{\text{Precision} \times \text{Recall}}{\text{Precision} + \text{Recall}}
   = 2 \times \frac{TP}{TP + \frac{FP + FN}{2}}.
\]
False positives are also a major concern in safety prediction (actually unsafe situations predicted as safe), so we also evaluate the \emph{False Positive Rate} (FPR):
\[
\text{FPR} = \frac{FP}{FP + TN}.
\]
To evaluate our chance predictors, we compute the \emph{Expected Calibration Error} (ECE, intuitively a weighted average difference between the predicted and true probabilities) and the \emph{Brier Score} (mean squared error between predicted probabilities and actual binary outcomes)~\cite{guo_calibration_2017,minderer_revisiting_2021,glenn1950verification}.

\begin{figure*}[t]
  \centering
  \includegraphics[width=\textwidth]{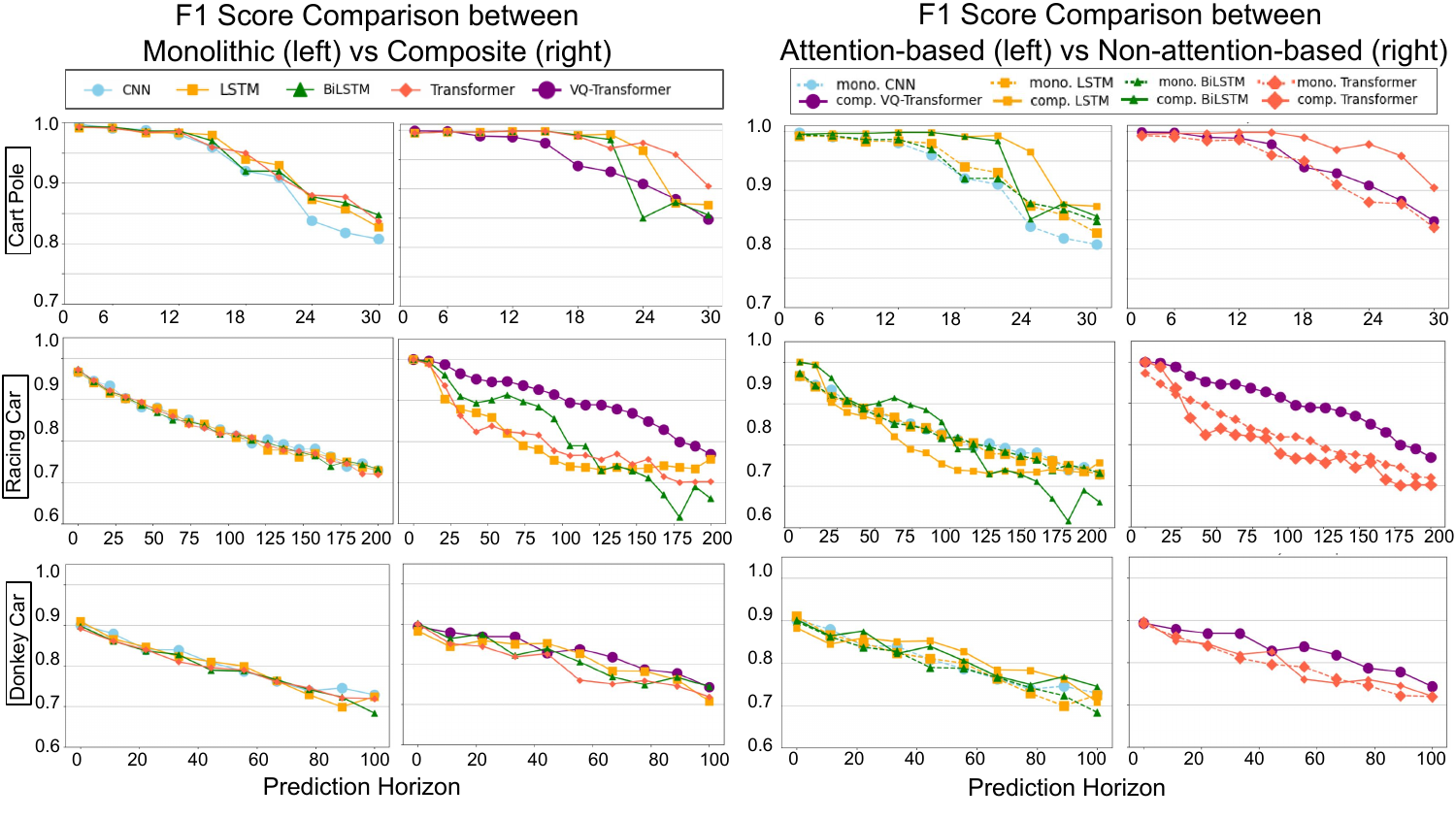}
  \caption{F1 score performance of safety label predictors over varied horizons. Upper to Lower: (1) cart pole; (2) car racing; (3) donkey car. Left pairs show the comparison between monolithic (mon.) models and composite (comp.) models, while right pairs show the comparison between attention-based models and non-attention-based models.}
  \label{fig:results}
\end{figure*}

\subsection{Experimental Setup}

\paragraph{Hardware}
The computationally heavy training was performed on a single NVIDIA A100 GPU, and other light tasks like data collection and conformal calibration were done on a CPU workstation with a 12th Gen Intel Core i9-12900H CPU and 64GB RAM in Ubuntu 22.04.

\paragraph{Datasets}
Deep Q-networks (DQN) were used to implement image-based controllers both for the racing car and the cart pole. For the racing car, we collected 240K samples for training and 60K for calibration/validation/testing each. For the cart pole, we collected 90K samples for each of the four datasets. All images were processed with normalization, which facilitated stable model training and improved generalization across different environments.

Each case study's data is randomly partitioned into four datasets. All the training tasks for the predictors are performed on the training dataset $\dataset_tr$. The calibration dataset $\dataset_cal$ is to tune predictor hyperparameters and fit the calibrators. The validation dataset $\dataset_val$ is used to produce conformal guarantees, and finally, the test dataset $\dataset_te$ is used to compute the performance metrics like F1, FPR, ECE, and Brier scores.

\subsubsection{Training details}
Pytorch 1.13.1 with the Adam optimizer was used for training. The maximum training epoch is 500 for VAEs and 100 for predictors.
The safety loss in the Equation.~\ref{eq:2} uses $\lambda_1 = 1$ and $\lambda_2 = 4096$, which equals the total pixel count in our images. The miscoverage level is $\alpha=0.05$. We used the Adam optimizer (initial learning rate $10^{-3}$, reduced by a factor of 0.1 with a patience of 5 epochs) and early stopping with a patience of 15 epochs. The maximum training epochs were 500 for VAEs and 100 for predictors. Batch sizes were 128 for classifiers, 64 for latent predictors, and 16 for image predictors. The CNN-based evaluator and single-image predictor used two convolutional layers (kernels 3 and 5) with max-pooling, followed by three linear layers and a softmax. The VAE encoder had four convolutional layers and two linear layers (latent size 32), and the decoder mirrored this with transposed convolutions. Sequence models used one LSTM layer and one linear layer, with monolithic predictors outputting 2 classes and composite latent forecasters outputting 32-dimensional vectors.
For each test sample, multiple augmented views are generated using seven types of transformations: autocontrast, equalize, rotation, solarization, shear, translation, and posterization in our UDA. These augmentations alter low-level visual attributes while preserving the semantic content of the original image. The model’s predictions over these views are aggregated, and the network parameters are adapted to improve stability and robustness against domain variations for our evaluator.
We released our code at \url{https://github.com/Trustworthy-Engineered-Autonomy-Lab/hsai-predictor}.

\subsection{Comparative Results}

Below, we present comparisons and ablations along the six key dimensions of the proposed family of predictors and also discuss our conformal calibration's performance. In the figures and tables, ``mon'' and ``comp'' stand for monolithic and composite, respectively.

\begin{table*}[!t]
  \centering
  \caption{F1 Score and FPR.}
  \label{tab:f1}
  \scriptsize
  \renewcommand{\arraystretch}{1.2}
  \setlength{\tabcolsep}{4pt}
\makebox[\textwidth][c]{%
\begin{tabular}{lccccccccc}

    \toprule
    \multirow{2}{*}{\textbf{Model}} &
    \multicolumn{3}{c}{\textbf{Racing Car}} &
    \multicolumn{3}{c}{\textbf{Cart Pole}} &
    \multicolumn{3}{c}{\textbf{Donkey Car}} \\
    \cmidrule(lr){2-4} \cmidrule(lr){5-7} \cmidrule(lr){8-10}
    & \textbf{k=60} & \textbf{k=120} & \textbf{k=180}
    & \textbf{k=10} & \textbf{k=20} & \textbf{k=30}
    & \textbf{k=30} & \textbf{k=60} & \textbf{k=90} \\
    \midrule
    \textbf{F1 score} & & & & & & & & & \\
    mono. CNN & 88.04 & 79.51 & 73.83 & \textbf{99.65} & 94.45 & 95.21 & 88.99 & 83.73 & \textbf{77.18} \\
    mono. LSTM & 87.89 & 80.48 & 74.95 & 98.29 & 93.97 & 85.72 & 84.69 & 79.92 & 69.89 \\
    mono. BiLSTM & 86.88 & 80.24 & 75.09 & 98.59 & 91.35 & 86.76 & 83.69 & 78.72 & 72.27 \\
    mono. Transformer & 87.34 & 80.89 & 74.55 & 98.38 & 94.45 & 87.21 & \textbf{88.99} & 77.83 & 73.25 \\
    comp. LSTM & 85.99 & 73.67 & 73.60 & \textbf{99.65} & \textbf{99.11} & 87.50 & 85.87 & 82.66 & 76.19 \\
    comp. BiLSTM & 90.17 & 78.93 & 61.54 & \textbf{99.65} & 98.37 & 87.02 & 87.49 & 80.55 & 76.73 \\
    comp. Transformer & 83.83 & 76.62 & 70.18 & 99.47 & 98.93 & \textbf{95.21} & 84.46 & 76.03 & 74.59 \\
    comp. VQ-Transformer & \textbf{94.55} & \textbf{89.35} & \textbf{79.03} & 98.65 & 93.83 & 88.21 & 88.18 & \textbf{83.73} & \textbf{77.18} \\
    \midrule
    \textbf{FPR} & & & & & & & & & \\
    mono. CNN & 27.25 & 39.48 & 54.11 & 8.41 & 28.43 & 34.39 & 26.35 & 37.16 & 42.83 \\
    mono. LSTM & 26.57 & 35.30 & 44.56 & 5.31 & 24.12 & 36.44 & \textbf{24.89} & 35.16 & 38.66 \\
    mono. BiLSTM & 26.69 & 36.35 & 45.34 & 10.36 & 22.27 & 35.93 & 27.47 & 33.63 & 47.54 \\
    mono. Transformer & 25.48 & 36.81 & 42.80 & 13.86 & 34.08 & 37.29 & 25.76 & 31.29 & 40.33 \\
    comp. LSTM & \textbf{16.75} & \textbf{23.98} & 35.41 & 10.38 & 19.68 & 39.45 & 32.56 & 30.46 & \textbf{36.71} \\
    comp. BiLSTM & 28.57 & 30.71 & \textbf{34.29} & \textbf{3.09} & \textbf{16.92} & 46.33 & 28.87 & \textbf{27.23} & 40.37 \\
    comp. Transformer & 17.13 & 26.56 & 35.71 & 14.51 & 30.91 & 31.34 & 41.10 & 30.36 & 47.90 \\
    comp. VQ-Transformer & 18.18 & 27.01 & 35.72 & 3.56 & 24.94 & \textbf{28.28} & 32.12 & 34.73 & 41.92 \\
    \bottomrule
  \end{tabular}
  }
\end{table*}

\begin{table*}[t]
  \centering
  \caption{ECE and Brier Score before (gray rows) and after (white rows) calibration.}
  \label{tab:ece}
  \scriptsize
  \renewcommand{\arraystretch}{1.2}
  \resizebox{\textwidth}{!}{%
  \begin{tabular}{lccc|ccc|ccc}
    \toprule
    \textbf{Model} & \multicolumn{3}{c|}{\textbf{Racing Car}} & \multicolumn{3}{c|}{\textbf{Cart Pole}} & \multicolumn{3}{c}{\textbf{Donkey Car}} \\
    & \textbf{k=60} & \textbf{k=120} & \textbf{k=180}
    & \textbf{k=10} & \textbf{k=20} & \textbf{k=30}
    & \textbf{k=30} & \textbf{k=60} & \textbf{k=90} \\
    \midrule
    \textbf{ECE} \\
          
 \rowcolor{gray!20}
mono. CNN& 0.0715& 0.1349& 0.1390  
& 0.1348
& 0.4031
& 0.7393
& 0.0664
& 0.1807
& 0.2798
\\
      mono. CNN &0.0079& 0.0094& 0.0090
& 0.0031
& 0.0030
& \textbf{0.0057}
& 0.0292
& 0.0614
& 0.0745

      \\

       \rowcolor{gray!20}
        mono. LSTM &0.1197& 0.1868& 0.1675 
& 0.0014
& 0.0096
& 0.2121
& 0.1447
& 0.2102
& 0.3336
        \\

        mono. LSTM &0.0054 & 0.0100&0.0119
& \textbf{0.0007}
& 0.0038
& 0.0176
& 0.0958
& 0.1971
& 0.2075
        \\
 \rowcolor{gray!20}        
mono. BiLSTM & 0.1558
& 0.2506
& 0.1905
& 0.0038
& 0.0018
& 0.1483
& 0.1702
& 0.2646
& 0.2705
\\
mono. BiLSTM
 & 0.0133
& 0.0149
& 0.0467
& 0.0029
& \textbf{0.0006}
& 0.0127
  & 0.1286
  & 0.1521
  & 0.2354

 \\
 
  \rowcolor{gray!20}      
  
  mono. Transformer& 0.1019
& 0.1121
& 0.3011
& 0.7341
& 0.4201
& 0.6722
& 0.0897
& 0.1751
& 0.2689

 \\mono. Transformer& 0.0164
& 0.0095
& 0.0194
& 0.0008
& 0.0023
& 0.0164
& 0.0667
& 0.0704 
& 0.0875

 \\

        \rowcolor{gray!20}
        comp. LSTM & 0.0052 & 0.0012& 0.0039 
& 0.1069
& 0.4659
& 0.5264
& 0.1112

      & 0.1304  
      & 0.2425
        \\

            comp. LSTM &\textbf{0.0002}&\textbf{0.0002}& 0.0007& 0.0045& 0.0088& 0.0119 
            & 0.0405

            & 0.0864
& \textbf{0.0320}
            \\

                   \rowcolor{gray!20}
 
               comp. BiLSTM & 0.2165
& 0.0725
& 0.1646 
& 0.0914
& 0.3388
& 0.4635

& 0.1171
& 0.1727
& 0.2108

\\
 
               comp. BiLSTM & 0.0093
& 0.0159
& 0.0215 
& 0.0834
& 0.1086
& 0.1419

& 0.0249
& 0.0809

& 0.1280

\\

                   \rowcolor{gray!20}

        comp. Transformer &   0.1803
& 0.3511
& 0.5474
& 0.1437
& 0.0281
& 0.0826
& 0.0851
& 0.1215
& 0.2034

\\

        comp. Transformer & 0.0074
& 0.0005
& 0.0143
& 0.0051
& 0.0255
& 0.0107

& 0.0769
& \textbf{0.0084}
& 0.0589

\\
            \rowcolor{gray!20}
      comp. VQ-Transformer& 0.0144
& 0.0026
& 0.0515
& 0.0100
& 0.1426
& 0.2589
& 0.0661
& 0.1229
& 0.2209

\\

      comp. VQ-Transformer& 0.0008
& 0.0032
& \textbf{0.0002}
& 0.0026
& 0.0082
& 0.0111
& \textbf{0.0038}
& 0.0739
& 0.1455

\\
        \midrule
       \textbf{Brier Score}\\

                     \rowcolor{gray!20}

      mono. CNN & 0.2269
& 0.2708
& 0.2400
& 0.0022
& 0.0405
& 0.1895
& 0.1298
& 0.2452
& 0.2680
\\
      mono. CNN & 0.1306
& 0.1067
& 0.1269
& 0.0018
& \textbf{0.0136}
& 0.0720
& 0.1075
& 0.1287
& 0.2162
\\
              \rowcolor{gray!20}

        mono. LSTM&  0.1965
& 0.3317
& 0.3079
& 0.0017
& 0.0275
& 0.1521
& 0.1547
& 0.2486
& 0.2834
        \\
 mono. LSTM 
 & \textbf{0.0020}
& 0.0993
& 0.1804
& \textbf{0.0011}
& 0.0180
& 0.1365
& 0.0095
& 0.0527
& 0.1644
        \\
                      \rowcolor{gray!20}
mono. BiLSTM 
& 0.1083
& 0.4916
& 0.2017
& 0.2822
& 0.5659
& 0.3453
& 0.0399
& 0.1038
& 0.1902
        \\
mono. BiLSTM 
& 0.0796
& 0.2248
& 0.1619
 & 0.0039
& 0.0184
& 0.1518
& 0.0042
& 0.0186
& \textbf{0.0728}
 \\
              \rowcolor{gray!20}
              mono. Transformer
& 0.0126
& 0.2556
& 0.1777
& 0.4970
& 0.9131
& 0.6400
& 0.0734
& 0.0971
& 0.1611
 \\

             mono. Transformer

& 0.0235
& 0.0523
&0.0939
& 0.0197
& 0.0159
& \textbf{0.0382}
& 0.0697
& 0.0797
& 0.1223
       \\            
       \rowcolor{gray!20}
        comp. LSTM & 0.1938
& 0.2663
& 0.3749
& 0.1975
& 0.1281
& 0.3551
& 0.1197
& 0.1214
& 0.1568

\\

        comp. LSTM& 0.1666
& 0.2318
& 0.2635
& 0.0519
& 0.0533
& 0.1045
& 0.0866
& 0.0957
& 0.1383
\\
       \rowcolor{gray!20}

               comp. BiLSTM & 0.2742
& 0.3393
& 0.3688
& 0.0915
& 0.2025
& 0.3337
& 0.0986
& 0.1117
& 0.1574
\\
               comp. BiLSTM 
               
              & 0.1343
& 0.2026
& 0.1937
& 0.0280
& 0.0211
& 0.1255
& 0.0836
& 0.0932
& 0.1501
\\
       \rowcolor{gray!20}

        comp. Transformer &  0.1152
& 0.1491
& 0.2169
& 0.119
& 0.2474
& 0.2782
& 0.0648
& 0.0813
& 0.0903

\\
        comp. Transformer & 0.0645
& 0.0981
& 0.1150
& 0.0638
& 0.0812
& 0.0987
& \textbf{0.0222}
& 0.0774
& 0.0786
\\
       \rowcolor{gray!20}

      comp. VQ-Transformer
      & 0.0674
& 0.2380
& 0.2587
& 0.2035
& 0.2584
& 0.2411
& 0.0792
& 0.0887
& 0.0975
\\
      comp. VQ-Transformer
      &0.0600
& \textbf{0.0559}
& \textbf{0.0521}

      & 0.0554
& 0.0617
& 0.0774
& 0.0544
& \textbf{0.0616}
& 0.0951

       \\
    \bottomrule
  \end{tabular}
    }
\end{table*}

\paragraph{Composite predictors outperform monolithic ones}
Monolithic predictors use their full learning capacity to predict safety but do not learn the underlying dynamics. Therefore, our initial hypothesis was that they would do well on short horizons --- but lose to composite predictors on longer horizons. The results support this hypothesis, as illustrated in Figure~\ref{fig:results}: the performance of monolithic predictors degrades faster as the horizon grows. Moreover, composite predictors offered a better FPR as shown in Table~\ref{tab:f1} for the donkey cars, which may be desirable in safety-critical systems. We contend that the inferior performance of monolithic predictors arises from the inherent difficulty of learning coherent, long-term latent dynamics --- a challenge that remains largely unresolved. By contrast, composite predictors achieve superior results by explicitly decomposing the safety‐prediction problem into three stages: (1) learning a suitable latent representation; (2) forecasting future states, and (3) evaluating those states’ safety. This architectural separation imposes a clear inductive bias: the model is first guided to generate plausible trajectories and then to judge their safety, rather than expecting a single network to discover both the underlying dynamics and the safety criterion simultaneously. As a result, each subtask becomes substantially more tractable, leading to higher overall performance.

\begin{table}[!t]
\caption{F1 scores of evaluators on different datasets (in \%).}
\label{tab:ev1}
\centering
\footnotesize
\begin{tabular}{llccc}
\toprule
\textbf{Case Study} & \textbf{Evaluator Type} & \textbf{Normal} & \textbf{OOD} & \textbf{Mix} \\
\midrule
\multirow{3}{*}{Racing Car}
& Image Evaluator       & \textbf{ 99.56} & 63.94 & 78.23 \\
& Latent Evaluator      & 98.79 & 65.73 & 80.27 \\
& UDA Image Evaluator   & 97.13 & \textbf{81.65} & \textbf{91.34  } \\
\midrule
\multirow{3}{*}{Cart Pole}
& Image Evaluator       & 99.66 & 67.21 & 91.05 \\
& Latent Evaluator      & \textbf{99.67} & 69.14 & 89.33 \\
& UDA Image Evaluator   & 99.06 & \textbf{ 76.52} & \textbf{96.62} \\
\midrule
\multirow{3}{*}{Donkey Car}
& Image Evaluator       & 96.20 & 93.03 & 94.43 \\
& Latent Evaluator      & \textbf{ 97.19} & 92.21 & 94.70 \\
& UDA Image Evaluator   & 96.05 & \textbf{94.14} & \textbf{95.97} \\
\bottomrule
\end{tabular}
\end{table}

\begin{table}[!t]
\caption{FPR of evaluators on different datasets (in \%).}
\label{tab:ev2}
\centering
\footnotesize
\begin{tabular}{llccc}
\toprule
\textbf{Case Study} & \textbf{Evaluator Type} & \textbf{Normal} & \textbf{OOD} & \textbf{Mix} \\
\midrule
\multirow{3}{*}{Racing Car}
& Image Evaluator       & 0.42 & 52.78 & 23.53 \\
& Latent Evaluator      & 2.15 & 30.82 & 19.62 \\
& UDA Image Evaluator   & 0.62 & 7.07 & 4.37 \\
\midrule
\multirow{3}{*}{Cart Pole}
& Image Evaluator       & 0.53 & 21.51 & 9.88 \\
& Latent Evaluator      & 0.61 & 18.34 & 7.69 \\
& UDA Image Evaluator   & 1.34 & 16.72 & 10.03 \\
\midrule
\multirow{3}{*}{Donkey Car}
& Image Evaluator       & 2.69 & 18.25 & 6.05 \\
& Latent Evaluator      & 3.05 & 11.97 & 5.32 \\
& UDA Image Evaluator   & 3.21 & 9.15 & 5.42 \\
\bottomrule
\end{tabular}
\end{table}

\paragraph{Attention significantly improves composite predictor performance}
Incorporating attention blocks via transformers into the composite predictor architecture yielded consistent improvements in both FPR and F1 scores across all forecasting horizons as shown in Table~\ref{tab:f1}. This suggests that leveraging both past and future context enhances the model's ability to produce temporally coherent and accurate predictions. Additionally, we observed a substantial decrease in ECE as indicated in Table~\ref{tab:ece}, indicating that the VQ-transformers not only perform better but also make significantly more calibrated predictions.

\paragraph{Latent evaluators outperform image-based ones}
Latent evaluators outperform image-based ones both in F1 and FPR, as shown in the Tables~\ref{tab:ev1} and~\ref{tab:ev2}.
This is primarily because the VAE’s reconstruction objective encourages the latent space to capture fine-grained image details, while the prior imposes additional structure that facilitates classification. Moreover, KL regularization promotes smoothness of the latent manifold and can implicitly disentangle task-relevant features.
Additionally, the latent representation filters out pixel-level noise and, by mapping sequential observations into a compact and structured manifold, preserves the temporal relationships between states. This smoothness in latent trajectories makes it easier for the evaluator to capture long-term safety-relevant dynamics and avoid being misled by transient visual variations.

\paragraph{UDA evaluator improves robustness under distribution shift}
We evaluate the effectiveness of unsupervised domain adaptation in enhancing the evaluator's robustness to distribution shift. In addition to the normal images, we test on two special test subsets: \textit{OOD}, which consists of 500 images sampled from predictions with manual labels that exhibit strong distribution shift, and \textit{Mix}, which randomly selects 500 predictions containing both in-distribution and OOD examples, with proportions varying by case study. As shown in Tables~\ref{tab:ev1} and \ref{tab:ev2}, across all case studies, UDA-based image evaluators consistently outperform both standard image and latent evaluators on the OOD and Mix subsets, without losing almost any performance on the normal data. These results demonstrate that UDA is a promising technique to generalize safety evaluators to challenging out-of-distribution scenarios.

\paragraph{Calibration improves safety‐chance predictor reliability.}
The reliability diagram in Figure~\ref{fig:reliability-diagrams} exemplifies a common situation among uncalibrated chance predictors: underconfident for the rejected class (below $0.5$), overconfident for the chosen class (above $0.5$). The aggregate calibrated results for different architectures and prediction horizons are shown in Table~\ref{tab:ece}. We can see that our calibration consistently reduces the overconfidence and leads to a lower ECE, even for long prediction horizons. The most effective calibrators were the isotonic regression for the racing car and ENIR for the cart pole and donkey car. This leads us to conclude that safety-chance prediction is a more suitable problem formulation for highly uncertain image-controlled robots rather than safety-label prediction.

\paragraph{Conformal calibration coverage is reliable}
Supporting our theoretical claims, the predicted intervals for calibration errors contained the true error values from the test data (Figure~\ref{fig:coverage}). The coverage for the racing car is $96.24\%\pm4.75\%$, the cart pole is $96.09\%\pm5.71\%$, and the donkey car is $98.52\%\pm3.76\%$.
Our average error bound is $0.0924\pm0.0438$ for the racing car, $0.0409\pm0.0200$ for the cartpole, and $0.0628\pm0.0296$ for the donkey car. Therefore, our calibration intervals, constructed as predicted chance $\pm$ its bin's calibration bound, can be used reliably and informatively in online robotic tasks like prediction, planning, and adaptation.

\begin{figure*}[t]
  \centering
  \includegraphics[width=\textwidth]{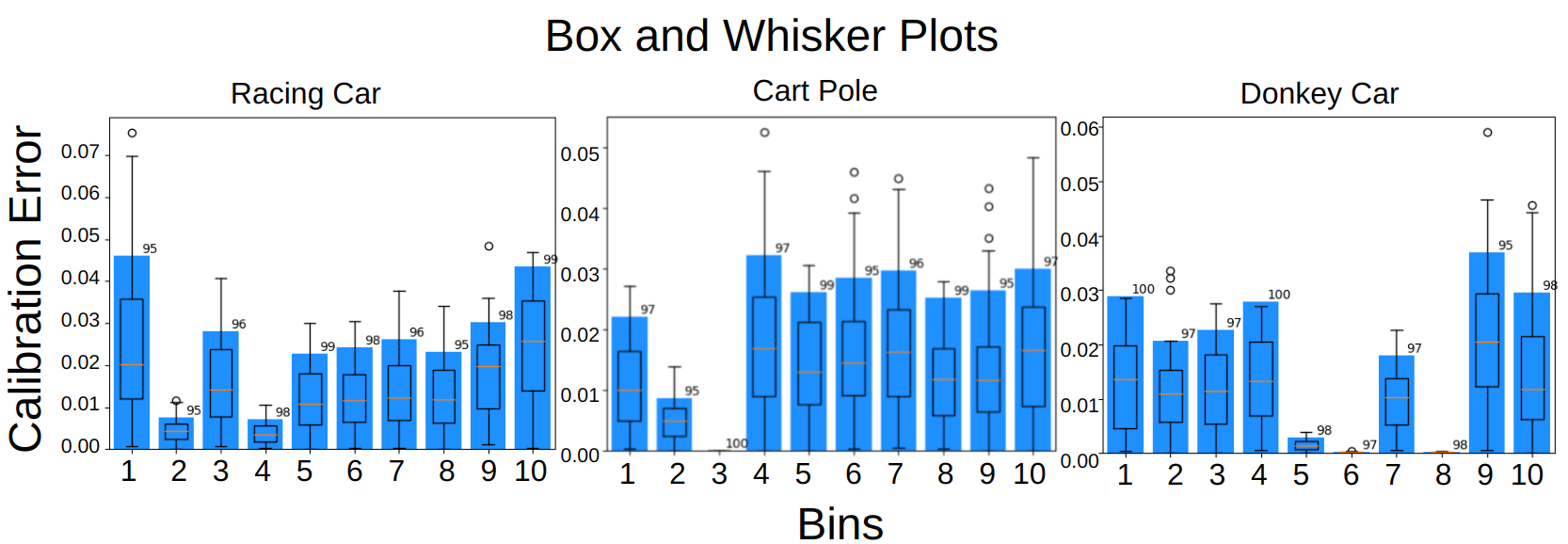}
  \caption{Our conformal bounds (in blue) for calibration error of the monolithic predictor (left to right: racing car, cart pole, and donkey car) with horizon $k=18$ for $\alpha=0.05$ contain more than 95\% of the true calibration errors (box and whisker plots).}
  \label{fig:coverage}
\end{figure*}

\section{Discussion and Conclusion}
\label{sec:conclusion}

\paragraph{Limitations} Our predictor family's scope is limited to systems where safety can be inferred from raw sensor data. This assumption holds in domains like autonomous driving, where an imminent collision, lane departure, or off-track event can be visually detected. Relatedly, the safety specifications considered in this work are restricted to simple state-based binary predicates ($\varphi: \mathcal{X} \rightarrow \{0,1\}$) and do not capture temporal requirements. 
 In contrast, there exist systems where safety depends on unobservable internal states or variables that are not visually apparent. For example, for an aircraft whose structural integrity depends on internal component stresses, raw visual or range data would be insufficient to assess safety.
 
Obtaining labeled safety data is associated with notable costs, especially for negative safety samples in physical systems. Our approach is limited to systems where it is possible to generate a large volume of labeled data to effectively train the predictors. For instance, our donkey car case study required labeling over 10,000 images, which took approximately 5-10 hours of human effort, including iterative refinement. While this labeling burden is non-trivial, it is feasible for many robotics applications where safety-critical scenarios can be systematically collected. Furthermore, this burden can be substantially alleviated through automated and semi-automated labeling techniques. Recent frameworks for active learning, weak supervision, and programmatic labeling have demonstrated significant reductions in manual annotation effort for similar vision-based tasks~\cite{settles2009active,ratner2017snorkel,ratner2019snorkel}. 

Some prediction approaches have performed poorly, like monolithic \emph{flexible-horizon} predictors, which output the time to the first expected safety violation. Their prediction space appeared too complex and insufficiently regularized, leading to poor learning and performance. Also, existing test-time adaptation methods operate at the image level, which is potentially inefficient. Exploring adaptation in latent representation could further improve the performance of the latent evaluator under distribution shift.
Thus, learning strong safety-informed representations for autonomy remains an open problem.
With the rise of large language models, meaning representations in multimodal autoregressive models may provide more possibilities to explore the surrogate dynamics of autonomy systems~\cite[]{liu2023meaning}. 

\paragraph{Future work}
\looseness=-1
Interesting future directions include adding physical constraints to latent states similar to Neural ODEs, using a generative model to obtain surrogate unsafe samples from a more realistic environment, and applying vision-language and vision-language-action models to automatically label the safety of generative predictions~\cite[]{wen_social_2022}. Another opportunity is to mitigate the labeling burden via transfer learning from simulation and generative models. Furthermore, our framework can be extended to support safety specifications in linear temporal logic (LTL), enabling safety predictions that reason over sequences of events and long-horizon behavioral constraints~\cite[]{hahn2021teaching}.
\backmatter

\section*{Declarations}

\subsection*{Acknowledgments}

First, the authors would like to thank Cade McGlothlin and Rohith Nama Reddy for experimenting with prediction architectures. Second, they would like to thank Srikruth Puram, Shaurya Wahal, and Pham ``Tony'' Hung for their work on various simulators. Finally, the authors also thank Dong-You ``Sam'' Jhong, Ao Wang, Chenxi Zhou, and Cesar Valentin for setting up Donkey Cars.

\subsection*{Funding}

The authors disclose receipt of the following financial support for the research, authorship, and publication of this article. This work was supported in part by the NSF Grant CNS 2513076. Any opinions, findings, or conclusions expressed in this material are those of the authors and do not necessarily reflect the views of the National Science Foundation (NSF) or the U.S. Government.


\bibliography{new,response,full-autogenerated}

\end{document}